\theoremstyle{plain}
\newtheorem{thm}{Theorem}[section]
\newtheorem{assm}{Assumption}[section]
\newtheorem{prop}[thm]{Proposition}
\newtheorem*{nonumbprop}{Proposition}
\newtheorem*{cor}{Corollary}
\theoremstyle{definition}
\newtheorem{defn}{Definition}[section]
\let\svthefootnote\thefootnote
\newcommand\freefootnote[1]{%
  \let\thefootnote\relax%
  \footnotetext{\hspace{-1.5em}#1}%
  \let\thefootnote\svthefootnote%
}
\begin{document}

%

%

\twocolumn[

\aistatstitle{Equivariant Representation Learning via Class-Pose Decomposition}

\aistatsauthor{\quad  Giovanni Luca Marchetti* \And Gustaf Tegn\'er* \And Anastasiia Varava \And Danica Kragic}

\aistatsaddress{School of Electrical Engineering and Computer Science, KTH Royal Institute of Technology \\
    Stockholm, Sweden } ]

\begin{abstract}
We introduce a general method for learning representations that are equivariant to symmetries of data. Our central idea is to decompose the latent space into an invariant factor and the symmetry group itself. The components semantically correspond to intrinsic data classes and poses respectively. The learner is trained on a loss encouraging equivariance based on supervision from relative symmetry information. The approach is motivated by theoretical results from group theory and guarantees representations that are lossless, interpretable and disentangled. We provide an empirical investigation via experiments involving datasets with a variety of symmetries. Results show that our representations capture the geometry of data and outperform other equivariant representation learning frameworks.   
\end{abstract}

\section{INTRODUCTION}
For an intelligent agent aiming to understand the world and to operate therein, it is crucial to construct rich \emph{representations} reflecting the intrinsic structures of the perceived data (\cite{bengio_disentanglement}). A variety of self-supervised approaches have been proposed to address the problem of representation learning such as (variational) auto-encoders (\cite{kingma2013auto, baldi2012autoencoders}) and contrastive learning methods (\cite{chen2020simple, jaiswal2020survey}). These approaches are applicable to a wide range of scenarios due to their generality but often fail to capture fundamental aspects hidden in data. For example, it has been recently shown that \emph{disentangling} intrinsic factors of variation requires specific biases or some form of supervision (\cite{challenging_disentanglement}). This raises the need for representation learning paradigms leveraging upon specific structures carried by data.


\begin{figure}[t]
\centering
\includegraphics[width=.80\linewidth]{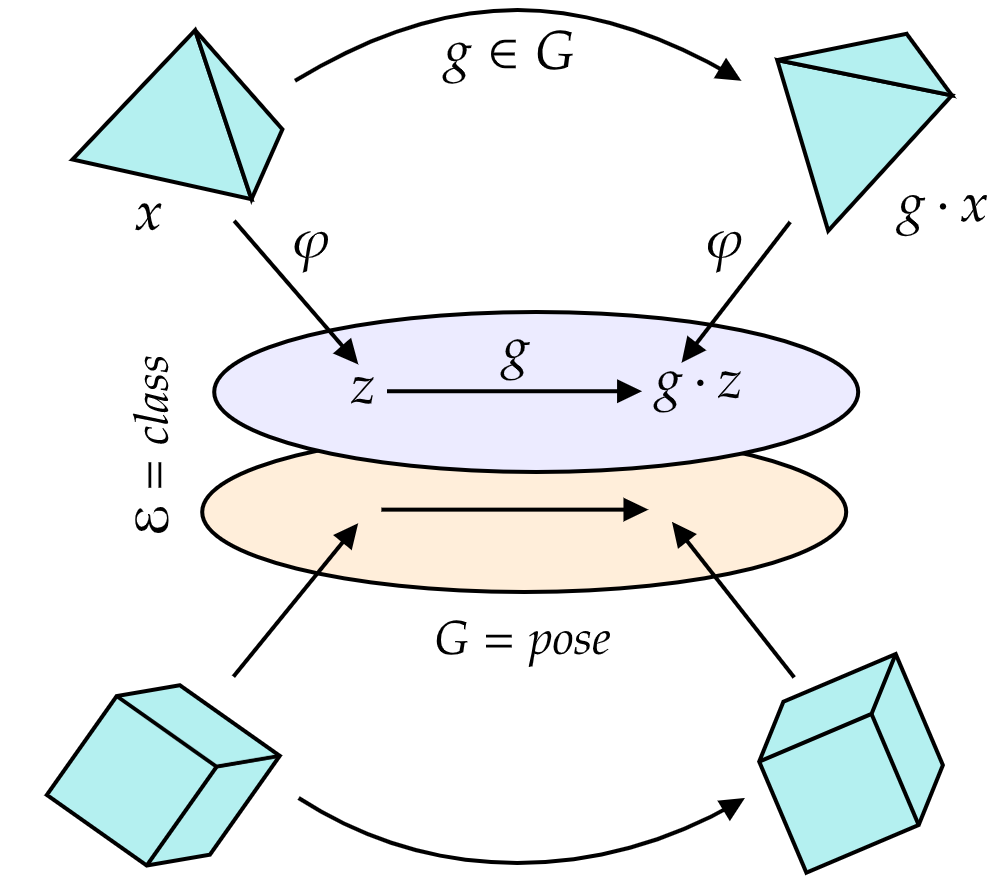}
\caption{An illustration of our equivariant representation decomposing intrinsic class and pose of data.}
\label{fig:equivariance}
\end{figure}

A fundamental geometric structure of datasets consists of their \emph{symmetries} (\cite{higgins2022symmetry}). Such structure arises in several practical scenarios. Consider for example images depicting rigid objects in different poses. In this case the symmetries are rigid transformations (translations and rotations) that \emph{act} on datapoints by transforming the pose of the depicted object. Symmetries not only capture the geometry of the pose but additionally preserve the object's shape, partitioning the dataset into intrinsic \emph{invariant} classes. The joint information of shape and pose describes the data completely and is recoverable from the symmetry structure alone. Another example of symmetries arises in the context of an agent exploring an environment. Actions performed by a mobile robot can be interpreted as changes of frame i.e., symmetries transforming the data the agent perceives (see Figure \ref{figagent}). Assuming the agent is capable of odometry (measurement of its own movement), such symmetries are collectable and available for learning.  All this motivates the design of representations that rely on symmetries and behave coherently with respect to them -- a property known as \emph{equivariance} (\cite{bengio2019meta, higgins2018towards}). 
\freefootnote{*Equal contribution.}

In this work we introduce a general framework for equivariant representation learning. Our central idea is to encode \emph{class and  pose separately} by decomposing the latent space into an invariant factor $\mathcal{E}$ and a symmetry component $G$ (see Figure \ref{fig:equivariance}). The pose component $G$ extracts geometry from data while the class component is interpretable and necessary for a lossless representation. We then train a representation learner $\varphi: \ \mathcal{X} \rightarrow \mathcal{E}\times G$ via a loss encouraging equivariance relying on supervision from relative symmetries between datapoints. Our methodology is based on a theoretical result guaranteeing that under mild assumptions an ideal learner achieves isomorphic representations by being trained on equivariance alone. Another advantage of our framework in the presence of multiple symmetry factors is that each of them can be varied independently by acting on the pose component. This realizes disentanglement in the sense of \cite{higgins2018towards} which, as mentioned, would not be possible without the information carried by symmetries. 

\begin{figure}
\centering
\includegraphics[width=.87\linewidth]{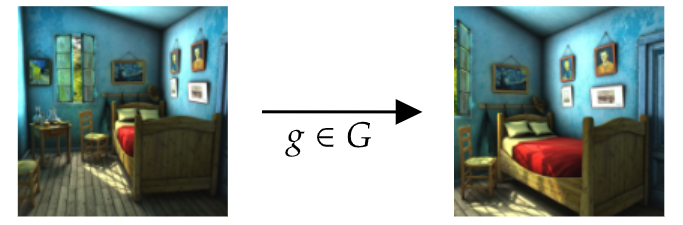}
\caption{Actions performed by a mobile agent can be seen as symmetries of the perceived data.}
\label{figagent}
\end{figure}

 We rely on the abstract language of (Lie) \emph{group theory} in order to formalize symmetries and equivariance. As a consequence, our framework is general and applicable to arbitrary groups of symmetries and to data of arbitrary nature. This is in contrast with previous works on equivariance often focusing on specific scenarios. For example, a number of works focus on Euclidean representations and linear or affine symmetry groups (\cite{guo2019affine, worrall2017interpretable, dynamic_enviroments}) while others enforce equivariance via group \emph{convolutions} (\cite{taco_group_equiv, cohengeneraltheory}). The latter are only applicable when data consists of signals over a base space $P$ (e.g., a pixel plane or a voxel grid) and symmetries are induced by the ones of $P$, which is limiting and does not lead to interpretable and structured representations. On the other hand, some recently introduced frameworks aim to jointly learn the equivariant representation together with the latent dynamics/symmetries (\cite{kipf2019contrastive, van2020plannable}). Although this has the advantage that the group of symmetries is not assumed to be known a priori, the obtained representation is again unstructured, uninterpretable, and comes with no theoretical guarantees. \\

 We empirically investigate our framework on image datasets with a variety of symmetries including translations, dilations and rotations. We moreover provide both qualitative and quantitative comparisons with competing equivariant representation learning frameworks. Results show that our representations exhibit more structure and outperform the baselines in terms of latent symmetry prediction. Moreover, we show how the preservation of geometry of our framework can be applied to a \emph{mapping} task: for data collected by a mobile agent our representation can be used to extract maps of multiple environments simultaneously. We provide a Python implementation together with data and code for all the experiments at a publicly available repository \footnote{\small	\url{https://github.com/equivariant-ml/equivariant-representation-learning}}. In summary, our contributions include:
\begin{itemize}
\item  A method for learning equivariant representations separating intrinsic data classes from poses. 

\item A general mathematical formalism based on group theory, which ideally guarantees lossless and disentangled representations. 

\item An empirical investigation via a set of experiments involving various group actions, together with applications to scene mapping through visual data. 
\end{itemize}

\section{THE MATHEMATICS OF SYMMETRIES}\label{group}

We now introduce the necessary mathematical background on symmetries and equivariance. The modern axiomatization of symmetries relies on their algebraic structure i.e., composition and inversion. The properties of those operations are captured by the abstract concept of a \emph{group} (\cite{rotman2012introduction}). 

\begin{defn}\label{groupdef}
A group is a set $G$ equipped with a \emph{composition map} $G \times G \rightarrow G$ denoted by $(g,h) \mapsto gh$, an \emph{inversion map} $G \rightarrow G$ denoted by $g \mapsto g^{-1}$, and a distinguished \emph{identity element} $1 \in G$ such that for all $g, h, k \in G$:
\begin{center}
\begin{tabular}{ccc}
\emph{Associativity} &   \emph{Inversion} & \emph{Identity}  \\
 $g(hk) = (gh)k$ & \hspace{.2cm} $g^{-1}g = g g^{-1} = 1$  \hspace{.2cm} & $g1 = 1g = g$ 
\end{tabular}
\end{center}
\end{defn}

Examples of groups include the permutations of a set and the general linear group $\textnormal{GL}(n)$ of $n \times n$ invertible real matrices, both equipped with usual composition and inversion of functions. An interesting subgroup of the latter is the special orthogonal group $\textnormal{SO}(n) = \{ A \in \textnormal{GL}(n) \ | \ A A^T = 1, \ \textnormal{det}(A)= 1 \}$, which consists of linear orientation-preserving isometries of the Euclidean space. An example of a commutative group (i.e., such that $gh = hg$ for all $g,h \in G$) is $\mathbb{R}^n$ equipped with vector sum as composition. 

The idea of a space $\mathcal{X}$ having $G$ as a group of symmetries is abstracted by the notion of group \emph{action}. 
\begin{defn}\label{actiondef}
An action by a group $G$ on a set $\mathcal{X}$ is a map $G \times \mathcal{X} \rightarrow \mathcal{X}$ denoted by $(g,x) \mapsto g \cdot x$, satisfying for all $g,h \in G, \ x \in \mathcal{X}$: 
\begin{center}
\begin{tabular}{ccc}
\emph{Associativity} & \hspace{1cm} & \emph{Identity} \\
$g\cdot (h \cdot x) = (gh) \cdot x$ & \hspace{1cm} &  $1 \cdot x = x$
 \end{tabular}
  \end{center}
\end{defn}

In general, the following actions can be defined for arbitrary groups: $G$ acts on any set \emph{trivially} by $g \cdot x = x$, and $G$ acts on itself seen as a set via (left) \emph{multiplication} by  $g \cdot h = gh$. A further example of group action is $\textnormal{GL}(n)$ acting on $\mathbb{R}^n$ by matrix multiplication.

Maps which preserve symmetries are called \emph{equivariant} and will constitute the fundamental notion of our representation learning framework. 

\begin{defn}
A map $\varphi \colon \ \mathcal{X} \rightarrow \mathcal{Z}$ between sets acted upon by $G$ is called \emph{equivariant} if $\varphi(g \cdot x) = g \cdot \varphi(x)$ for all $g \in G, x\in \mathcal{X}$. It is called \emph{invariant} if moreover $G$ acts trivially on $\mathcal{Z}$ or, explicitly, if  $\varphi(g \cdot x) = \varphi(x)$. It is called \emph{isomorphism} if it is bijective.
\end{defn}

 Now, group actions induce classes in $\mathcal{X}$ called \emph{orbits} by identifying points related by a symmetry. 
\begin{defn}
Consider the equivalence relation on $\mathcal{X}$ given by deeming $x$ and $y$ equivalent if $y = g \cdot x$ for some $g \in G$. The induced equivalence classes are called \emph{orbits}, and the set of orbits is denoted by $\mathcal{X} / G$. 
\end{defn}
\begin{figure}
\centering
\includegraphics[width=.95\linewidth]{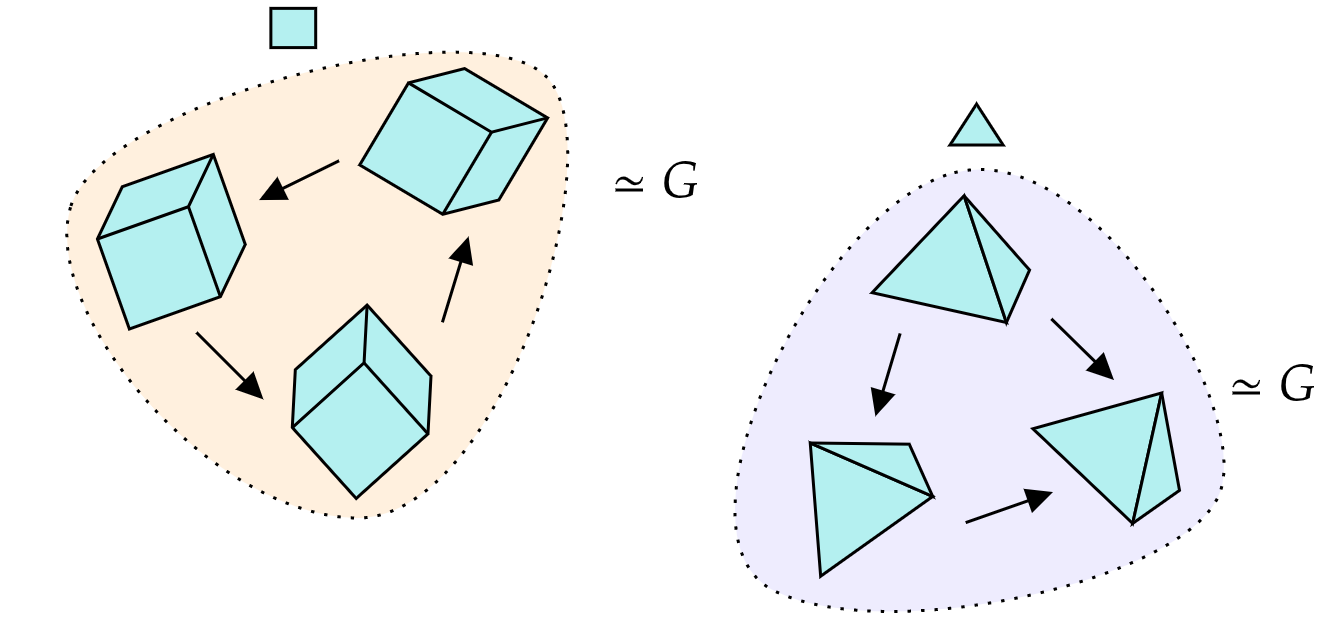}
\caption{Orbits of a (free) group action represent intrinsic classes of data. Each orbit is isomorphic to the symmetry group $G$ itself.}
\label{figorbits}
\end{figure}
For example the orbits of the trivial action are singletons, while the multiplication action has a single orbit. Data-theoretically, an orbit may be seen as an invariant, maximal class of data induced by the symmetry structure. In the example of rigid objects acted upon by translations and rotations, orbits indeed correspond to shapes (see Figure \ref{figorbits}).

It is intuitive to assume that a nontrivial symmetry $g \not = 1 \in G$ has to produce a change in data. If no difference is perceived, one might indeed consider the given transformation as trivial. We can thus assume that no point in $\mathcal{X}$ is fixed by an element of $G$ different from the identity or, in other words, $g \cdot x \not = x$ for $g \not = 1$. Such actions are deemed as \emph{free} and will be the ones relevant to the present work.  

\begin{assm}
\label{eq:free_action}
The action by $G$ on $\mathcal{X}$ is free.
\end{assm}

The following is the core theoretical result motivating our representation learning framework, which we will discuss in the following section. The result guarantees a general decomposition into a trivial and a multiplicative action and describes all the equivariant isomorphisms of such a decomposition.

\begin{prop}\label{iso}
The following holds: 

\begin{itemize}
    \item There is an equivariant isomorphism
\begin{equation}
\mathcal{X} \simeq (\mathcal{X} / G ) \times G
\end{equation}
where $G$ acts trivially on the orbits  and via multiplication on itself, i.e., $g \cdot (e, h) = (e, gh)$ for $g,h \in G$, $e \in \mathcal{X} / G$. In other words, each orbit can be identified equivariantly with the group itself. 

\item Any equivariant map $\varphi :  \ (\mathcal{X} / G ) \times G \rightarrow (\mathcal{X} / G ) \times G$ is a right multiplication on each orbit i.e., for each orbit $O \in \mathcal{X}/G$ there is an $h_O \in G$ such that $\varphi(O, g) = (O', g h_O)$ for all $g \in G$. In particular, if $\varphi$ induces a bijection on orbits then it is an isomorphism. 

\end{itemize}
\end{prop}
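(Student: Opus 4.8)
The plan is to prove the two parts separately, with the first part providing the structure on which the second part relies. For the first part, I would begin by choosing, for each orbit $O \in \mathcal{X}/G$, a basepoint $x_O \in O$ (this uses a choice function, i.e. the axiom of choice, which is harmless here). By freeness (Assumption \ref{eq:free_action}), the orbit map $G \to O$, $g \mapsto g \cdot x_O$, is a bijection: it is surjective by definition of an orbit, and injective because $g \cdot x_O = h \cdot x_O$ implies $(h^{-1}g) \cdot x_O = x_O$, hence $h^{-1}g = 1$ by freeness. Collecting these bijections over all orbits, I would define $\psi \colon (\mathcal{X}/G) \times G \to \mathcal{X}$ by $\psi(O, g) = g \cdot x_O$ and check it is a bijection (its inverse sends $x$ to $(O_x, g_x)$ where $O_x$ is the orbit of $x$ and $g_x$ the unique group element with $g_x \cdot x_{O_x} = x$). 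Equivariance is the computation $\psi(g \cdot (O, h)) = \psi(O, gh) = (gh) \cdot x_O = g \cdot (h \cdot x_O) = g \cdot \psi(O, h)$, using associativity of the action.

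For the second part, let $\varphi \colon (\mathcal{X}/G) \times G \to (\mathcal{X}/G) \times G$ be equivariant. Fix an orbit $O$ and evaluate $\varphi$ at the identity: write $\varphi(O, 1) = (O', h_O)$ for some orbit $O'$ and some $h_O \in G$. Then for arbitrary $g \in G$, equivariance applied to the action element $g$ on the point $(O,1)$ gives $\varphi(O, g) = \varphi(g \cdot (O,1)) = g \cdot \varphi(O,1) = g \cdot (O', h_O) = (O', g h_O)$, which is exactly the claimed formula — on the orbit $O$, $\varphi$ sends $g$ to $g h_O$, a right multiplication by the fixed element $h_O$, and sends the orbit label $O$ to the fixed orbit label $O'$. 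Finally, suppose $\varphi$ induces a bijection on orbits; I would show $\varphi$ is then bijective with inverse described the same way. Surjectivity: given $(O', g')$, pick the orbit $O$ mapping to $O'$; then $\varphi(O, g' h_O^{-1}) = (O', g' h_O^{-1} h_O) = (O', g')$. Injectivity: if $\varphi(O_1, g_1) = \varphi(O_2, g_2)$ then the orbit components force $O_1 = O_2$ (since the induced map on orbits is injective), and then $g_1 h_{O_1} = g_2 h_{O_2} = g_2 h_{O_1}$ gives $g_1 = g_2$ by cancellation in $G$.

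I do not anticipate a genuinely hard step; this is essentially the standard fact that a free action is a torsor over $G$ on each orbit, combined with the Yoneda-style observation that an equivariant map out of a ``free'' object is determined by where it sends the generator. The one point requiring a little care — and the closest thing to an obstacle — is being careful about the distinction between the left action (which $\varphi$ must intertwine) and the right multiplication that appears in the conclusion: the element $h_O$ multiplies on the right precisely because left and right multiplication commute, so that $g \cdot (O', h_O)$, meaning left multiplication by $g$, equals $(O', g h_O)$ while still allowing the ``constant'' $h_O$ to sit on the right. I would state this cancellation/commutation point explicitly to avoid any confusion, and I would also note that the choice of basepoints $x_O$ in the first part does not affect the statement, since changing $x_O$ merely reparametrizes each orbit by a right translation.
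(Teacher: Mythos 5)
Your proposal is correct and follows essentially the same route as the paper's proof: choosing a representative (basepoint) per orbit and using freeness to get the bijection $(\mathcal{X}/G)\times G \to \mathcal{X}$, then evaluating an equivariant $\varphi$ at $(O,1)$ and propagating by equivariance to obtain the right-multiplication form. The only difference is that you also spell out the final ``bijection on orbits implies isomorphism'' step, which the paper leaves implicit.
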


We refer to the Appendix for a proof. The first part of the statement can be interpreted in plain words as a \emph{decomposition of classes from poses} for any free group action. According to this terminology, a pose is abstractly an element of an arbitrary group $G$
while a class is an orbit. The intuition behind the second part of the statement is that any equivariant map performs an orbit-dependent `change of frame' in the sense that elements of an orbit $O$ get composed on the right by a symmetry $h_O$ depending on $O$. This will imply that our representations can differ from ground-truth ones only by such change of frames and will in fact guarantee isomorphic representations for our framework.

\section{METHOD}\label{repr}

    \subsection{General Equivariant Representation Learning} 
    In the context of \emph{representation learning} the goal of the model is to learn a map deemed `representation' $\varphi \colon \ \mathcal{X} \rightarrow \mathcal{Z}$ from the data space $\mathcal{X}$ to a \emph{latent space}. The learner optimizes a loss $\mathcal{L} \colon \ \mathcal{M} \rightarrow \mathbb{R}$ over parameters $ \theta \in \mathcal{M}$ of the map $\varphi = \varphi_\theta$. The so-obtained representation can be deployed in downstream applications for an improved performance with respect to operating in the original data space.


The central assumption of \emph{equivariant} representation learning is that data carries symmetries which the representation has to preserve. As discussed in Section \ref{group}, this means that a group of symmetries $G$ acts on both $\mathcal{X}$ and $\mathcal{Z}$ and that the representation $\varphi$ is encouraged to be equivariant via the loss. While the action on $\mathcal{Z}$ is designed as part of the model, the action on $\mathcal{X}$ is unknown in general and has to be conveyed by data. Concretely, the dataset consists of triples $(x, g, y)$ with $x \in \mathcal{X}, g \in G$ and $y = g \cdot x$. The group element $g$ carries symmetry information which is relative between $x$ and $y$. Equivariance is then naturally encouraged via a loss in the form: 
\begin{equation}\label{loss}
\mathcal{L}(\theta; x,g, y) = d\left(\varphi_\theta(y), \ g \cdot \varphi_\theta(x) \right).
\end{equation}
Here $d \colon \ \mathcal{Z} \times \mathcal{Z} \rightarrow \mathbb{R}_{\geq 0}$ is a similarity function on $\mathcal{Z}$. Generally speaking, $d$ does not necessarily need to satisfy the axioms for a distance but we at least require it to be \emph{positive definite} i.e., $d(z,z') = 0$ iff $z=z'$. Note that we assume the group together with its algebraic structure to be known a priori and not inferred during the learning process. Its action over the latent space is defined in advance and constitutes the primary inductive bias for equivariant representation learning.

\subsection{Learning to Decompose Class and Pose} Motivated by Proposition \ref{iso}, we propose to set the latent space as: 
\begin{equation}\label{latentsep}
\mathcal{Z} = \underbrace{\mathcal{E}}_{Class} \times \underbrace{G}_{Pose}
\end{equation}
with $G$ acting trivially on $\mathcal{E}$ and via multiplication on itself. Here, $\mathcal{E}$ is any set which is meant to represent classes of the encoded data. Since there is in general no prior information about the action by symmetries on $\mathcal{X}$ and its orbits, $\mathcal{E}$ has to be set beforehand. Assuming $\mathcal{E}$ has enough capacity to contain $\mathcal{X} / G$, Proposition \ref{iso} shows that an isomorphic equivariant data representation is possible in $\mathcal{Z}$. By fixing (positive definite) similarity functions $d_\mathcal{E}$ and $d_G$ on $\mathcal{E}$ and $G$ respectively, we obtain a joint latent similarity function $d(z, z') =  d_\mathcal{E}(z_\mathcal{E},z'_\mathcal{E}) +  d_G(z_G,z'_G)$, where the subscripts denote the corresponding components. When $G \subseteq \textnormal{GL}(n)$ is a group of matrices, a typical choice for $d_G$ is the (squared) Frobenius distance i.e., the Euclidean distance for matrices seen as flattened vectors. The equivariance loss $\mathcal{L}(\theta; x,g, y)$ in Equation \ref{loss} then reads:
\begin{equation}\label{equivloss}
    \underbrace{ d_\mathcal{E}\left(\varphi^\mathcal{E}(y), \ \varphi^\mathcal{E}(x)\right)}_{Invariant} \ + \underbrace{d_G\left(\varphi^G(y), \ g\varphi^G(x)\right).}_{Multiplication-Equivariant}
\end{equation}

   Here we denoted the components of the representation map by $\varphi = (\varphi^\mathcal{E}, \varphi^G)$ and omitted the parameter $\theta$ for simplicity. To spell things out,  $\varphi^\mathcal{E}$ encourages data from the same orbit to lie close in $\mathcal{E}$  (i.e., $\varphi^\mathcal{E}$ is ideally invariant) while $\varphi^{G}$ aims for equivariance with respect to multiplication on the pose component $G$. 
    
    If $\varphi_{\mathcal{E}}$ is injective then Proposition \ref{iso} guarantees lossless (i.e., isomorphic) representations, which we summarize in the following corollary: 
    
    \begin{cor}
    Suppose that $\varphi_{\mathcal{E}}$ is injective. Then $\mathcal{L}(\theta; x,g, y)  = 0$ for all $x,g,y=g \cdot x$ if and only if $\varphi_\theta$ is an equivariant isomorphism on its image. 
    \end{cor}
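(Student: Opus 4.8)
The plan is to prove the corollary by chaining together Proposition \ref{iso} with the structure of the loss in Equation \ref{equivloss}. The ``if'' direction is essentially immediate: if $\varphi_\theta$ is equivariant, then by definition $\varphi_\theta(g\cdot x) = g\cdot\varphi_\theta(x)$ for all admissible triples, and since $d$ is positive definite, plugging $y = g\cdot x$ into Equation \ref{loss} gives $\mathcal{L}(\theta;x,g,y) = d(\varphi_\theta(y), g\cdot\varphi_\theta(x)) = d(g\cdot\varphi_\theta(x), g\cdot\varphi_\theta(x)) = 0$. (Injectivity of $\varphi_\mathcal{E}$ is not needed here, and in fact isomorphism is stronger than we need.) So the content is in the ``only if'' direction.

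For the forward direction, assume $\mathcal{L}(\theta;x,g,y) = 0$ for every triple $(x,g,y)$ with $y = g\cdot x$. First I would use positive definiteness of $d_\mathcal{E}$ and $d_G$ (and the fact that their sum vanishes iff both summands vanish) to conclude from Equation \ref{equivloss} that $\varphi^\mathcal{E}(y) = \varphi^\mathcal{E}(x)$ and $\varphi^G(y) = g\,\varphi^G(x)$ whenever $y = g\cdot x$. The first identity says $\varphi^\mathcal{E}$ is invariant, hence constant on each orbit; the second, together with the trivial action on $\mathcal{E}$, says that $\varphi_\theta = (\varphi^\mathcal{E},\varphi^G)\colon \mathcal{X}\to \mathcal{E}\times G$ is equivariant (component-wise equivariance gives equivariance into the product). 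This already establishes equivariance of $\varphi_\theta$, so it remains to upgrade equivariance plus injectivity of $\varphi_\mathcal{E}$ to an isomorphism.

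To get bijectivity I would invoke the first part of Proposition \ref{iso} to fix an equivariant isomorphism $\Psi\colon \mathcal{X}\xrightarrow{\sim} (\mathcal{X}/G)\times G$, and then consider the composite $\varphi_\theta\circ\Psi^{-1}\colon (\mathcal{X}/G)\times G \to \mathcal{E}\times G$. The only friction is that the second part of Proposition \ref{iso} is stated for equivariant self-maps of $(\mathcal{X}/G)\times G$, whereas here the target is $\mathcal{E}\times G$; but its proof only uses the group structure of the $G$-factor and the triviality of the action on the first factor, so the same argument shows that any equivariant map $(\mathcal{X}/G)\times G \to \mathcal{E}\times G$ has the form $(O,g)\mapsto (f(O), g\,h_O)$ for some function $f\colon \mathcal{X}/G\to\mathcal{E}$ and elements $h_O\in G$. (If one prefers not to re-examine that proof, one can instead argue directly: equivariance plus freeness of the multiplication action on the $G$-factor of the source forces the $\mathcal{E}$-component to be orbit-constant and the $G$-component to be a right translation on each orbit, which is the same computation.) Now $\varphi_\theta$ is injective iff $\varphi_\theta\circ\Psi^{-1}$ is, and since a right translation is a bijection of $G$, this composite is injective iff $f$ is injective; and since $\varphi^\mathcal{E}$ is orbit-constant with $\varphi^\mathcal{E} = f\circ(\text{orbit projection})\circ\Psi^{-1}$ after the identification, $f$ is exactly the map that $\varphi_\mathcal{E}$ induces on orbits — injective precisely when $\varphi_\mathcal{E}$ is injective (using that $\varphi_\mathcal{E}$ descends to orbits, which we showed from invariance). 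Surjectivity onto $\mathcal{E}$ is not claimed; the corollary's ``isomorphism'' should be read as isomorphism onto its image, or under the standing assumption that $\mathcal{E}$ has just enough capacity, i.e. $\mathcal{E}\cong\mathcal{X}/G$ via $f$. I expect the main obstacle — really the only nonroutine point — to be this bookkeeping about the target being $\mathcal{E}\times G$ rather than $(\mathcal{X}/G)\times G$, and pinning down in exactly what sense ``isomorphism'' is meant when $\mathcal{E}$ is an arbitrary set of sufficient capacity; everything else is a direct unwinding of positive definiteness and Proposition \ref{iso}.
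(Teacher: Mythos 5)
Your proposal is correct and follows the route the paper intends: deduce invariance of $\varphi^\mathcal{E}$ and multiplication-equivariance of $\varphi^G$ from positive definiteness of $d_\mathcal{E}$ and $d_G$, then apply Proposition \ref{iso} to upgrade equivariance plus injectivity on orbits to an isomorphism (the paper gives no separate proof of the corollary, treating it as an immediate consequence of the proposition). Your extra care about the target being $\mathcal{E}\times G$ rather than $(\mathcal{X}/G)\times G$, and about reading ``isomorphism'' as isomorphism onto the image, is a legitimate refinement of a point the paper leaves implicit.
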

    In order to force injectivity, we propose a typical solution from contrastive learning literature (\cite{chen2020simple}) encouraging latent points to spread apart. To this end, we opt for the standard \emph{InfoNCE loss} (\cite{oord2018representation}), although other choices are possible. This means that we replace the term $d_\mathcal{E}\left(\varphi^\mathcal{E}(y), \ \varphi^\mathcal{E}(x)\right)$
    in Equation \ref{equivloss} with \begin{equation}
        \label{infonceeq}
        \frac{1}{\tau}d_\mathcal{E}\left(\varphi^\mathcal{E}(y), \ \varphi^\mathcal{E}(x)\right) +  \log \mathbb{E}_{x'} \left[ e^{-\frac{1}{\tau} d_\mathcal{E}\left(\varphi^\mathcal{E}(x'), \ \varphi^\mathcal{E}(x)\right)} \right].
    \end{equation}
The hyperparameter $\tau \in \mathbb{R}_{>0}$ (`temperature') controls the amount of latent entropy. Following \cite{oord2018representation, chen2020simple}, we set the class component as a sphere $\mathcal{E}= \mathbb{S}^m$ by normalizing the output of $\varphi_\mathcal{E}$. This allows to deploy the cosine dissimilarity $d_{\mathcal{E}}(z,z') = - \textnormal{cos}( \angle z z') = - z \cdot z'$ and is known to lead to improved performances due to the compactness of $\mathcal{E}$ (\cite{wang2020understanding}).

\subsection{Parametrizing via the Exponential Map}
The output space of usual machine learning models such as deep neural networks is Euclidean. Our latent space (Equation \ref{latentsep}) contains $G$ as a factor, which might be non-Euclidean as in the case of $G=\textnormal{SO}(n)$. In order to implement our representation learner $\varphi$ it is thus necessary to parametrize the group $G$. To this end, we assume that $G$ is a differentiable manifold (with differentiable composition and inversion maps) i.e., that $G$ is a \emph{Lie group}. One can then define the \emph{Lie algebra} $\mathfrak{g}$ of $G$ as the tangent space to $G$ at $1$. 


We propose to rely on the \emph{exponential map} $\mathfrak{g} \rightarrow G$, denoted by $v \mapsto e^v$, to parametrize $G$. This means that $\varphi$ outputs an element $v$ of $\mathfrak{g}$ that gets mapped into $G$ as $e^v$. Although the exponential map can be defined for general Lie groups by solving an appropriate ordinary differential equation, we focus on the case $G \subseteq \textnormal{GL}(n)$. The Lie algebra $\mathfrak{g}$ is then contained in the space of $n \times n$ matrices and the exponential map amounts to the matrix Taylor expansion $e^v = \sum_{k \geq 0} v^k / k! $. For specific groups the latter can be simplified via simple closed formulas. For example, the exponential map of $\mathbb{R}^n$ is the identity while for $\textnormal{SO}(3)$ it can be efficiently computed via the Rodrigues' formula (\cite{liang2018efficient}).

\subsection{Relation to Disentanglement}\label{disent}
Our equivariant representation learning framework is related to the popular notion of \emph{disentanglement} (\cite{bengio_disentanglement, bvae}). Intuitively, in a disentangled representation a variation of a distinguished aspect in the data is reflected by a change of a single component in the latent space. Although there is no common agreement on a rigorous formulation of the notion (\cite{challenging_disentanglement}), a proposal has been addressed in \cite{higgins2018towards}. The presence of multiple dynamic aspects in the data is formalized as an action on $\mathcal{X}$ by a decomposed group
\begin{equation}\label{factor}
G = G_1 \times \cdots \times G_n 
\end{equation}
where each of the factors $G_i$ is responsible for the variation of a single aspect. A representation $\varphi \colon \ \mathcal{X} \rightarrow \mathcal{Z}$ is then defined to be disentangled if (i) there is a decomposition $ \mathcal{Z} = \mathcal{Z}_1 \times \cdots \times \mathcal{Z}_n $ where each $\mathcal{Z}_i$ is acted upon trivially by the factors $G_j$ with $j \not = i$ and (ii) $\varphi$ is equivariant. 

Our latent space (Equation \ref{latentsep}) automatically yields to disentanglement in this sense. Indeed, in the case of a group as in Equation \ref{factor} we set  $\mathcal{Z}_i = G_i$. In order to deal with the remaining factor $\mathcal{Z}_0 = \mathcal{E}$, a copy of the trivial group $G_0 = \{ 1 \}$ can be added to $G$ without altering it up to isomorphism. The group $G \simeq G_0 \times \cdots \times G_n$ acts on $\mathcal{Z} = \mathcal{E}\times G = \mathcal{Z}_0 \times \cdots \times \mathcal{Z}_n$ as required for a disentangled latent space. In conclusion, our work fits in the line of research aiming to infer disentangled representation via indirect and weak forms of supervision (\cite{locatello2020weakly}), of which symmetry structures are an example.

\section{RELATED WORK}\label{related}

\textbf{Equivariant Representation Learning}. Models relying on symmetry and equivariance have been studied in the context of representation learning. These models are typically trained on variations of the equivariance loss (Equation \ref{loss}) and are designed for specific groups $G$ and actions on the latent space $\mathcal{Z}$. The pioneering \emph{Transforming Autoencoders} (\cite{hinton2011transforming}) learn to represent image data translated by  $G = \mathbb{R}^2$ in the pixel plane, with $\mathcal{Z}$ consisting of several copies of $G$ (`capsules') acting on itself. Although such models are capable of learning isomorphic representations, the orbits are not explicitly modeled in the latent space. In contrast, our invariant component $\mathcal{E}$ is an interpretable alternative to multiple capsules making orbits recoverable from the representation. A series of other works represent data in a latent space modelled via the group of symmetries $G$ (\cite{homeomorphic, tonnaer2020quantifying, yang2021towards}). These works however either do not reserve additional components dedicated to orbits, obtaining a representation that forgets the intrinsic classes of data, or address specific groups i.e., $\textnormal{SO}(3)$, the torus $\textnormal{SO}(2) \times \textnormal{SO}(2)$ and product of cyclic groups respectively. \emph{Affine Equivariant Autoencoders} (\cite{guo2019affine}) deal with affine transformations of the pixel-plane (shearing an image, for example) and implement a latent action through a hand-crafted map $t \colon \ G \rightarrow \mathcal{Z} = \mathbb{R}^n$.  Groups of rotations $\textnormal{SO}(n)$ linearly acting on a Euclidean latent space $\mathcal{Z}=\mathbb{R}^n$ are explored in  \cite{worrall2017interpretable, dynamic_enviroments}. Since rotating a vector around itself has no effect, linear actions are not free (for $n \geq 3$), which makes isomorphic representations impossible. \emph{Equivariant Neural Rendering} (\cite{renderer}) proposes a latent voxel grid on which $\textnormal{SO}(3)$ acts approximately by rotating and interpolating values. In contrast, our latent group action is exact and thus induces no loss of information. We provide an empirical comparison to both linear Euclidean actions and Equivariant Neural Rendering in Section \ref{experiments}. Lastly, \cite{winter2022unsupervised} have recently proposed to learn equivariant representations by splitting the latent space into an invariant component and an equivariant one, which bears similarity to our framework. Differently from us, however, the model is trained via an equivariance loss in the data space $\mathcal{X}$ and thus requires the group actions over data to be known a priori. As previously discussed, this is limiting and often unrealistic in practice.  

\textbf{Convolutional Networks}. Convolutional layers in neural networks (\cite{taco_group_equiv, cohengeneraltheory}) satisfy equivariance a priori with respect to transformations of the pixel plane. They were originally introduced for (discretized) translations and later extended to more general groups (\cite{sphericalcnn, symmetricsets, cohen2019gauge}). However, they require data and group actions in a specific form. Abstracty speaking, data need to consist of vector fields over a base space (images seen as RGB fields over the pixel plane, for example) acted upon by $G$, which does not hold in general. Examples of symmetries not in this form are changes in perspective of first-person images of a scene and  rotations of rigid objects on an image. Our model is instead applicable to arbitrary (Lie) group actions and infers equivariance in a data-driven manner. Moreover, equivariance through $G$-convolutions alone is hardly suitable for representation learning as the output dimension coincides with the input one. Dimensionality reduction techniques deployed together with convolutions such as max-pooling or fully-connected layers disrupt equivariance completely. The latent space in our framework is instead compressed and is ideally isomorphic to the data space $\mathcal{X}$ (Proposition~\ref{iso}).  

\begin{center}

\begin{table*}
\setlength\extrarowheight{5pt}
\caption{Datasets involved in our experiments, with the corresponding group of symmetries and number of orbits.}
\label{datasettable}
\centering
\begin{tabu}{cccccc} 
\toprule
  & \textsc{Sprites} & \textsc{Shapes} & \textsc{Multi-Sprites} & \textsc{Chairs} & \textsc{Apartments}      \\
    \midrule
    \midrule 
 $\mathcal{X}$ &   \begin{minipage}{.15\textwidth}
       \vspace{0.1cm} 
      \includegraphics[width=\linewidth]{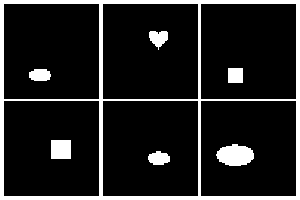}  \end{minipage}  \vspace{0.1cm} 
     
    & \begin{minipage}{.15\textwidth}
       \vspace{0.1cm}\includegraphics[width=\linewidth]{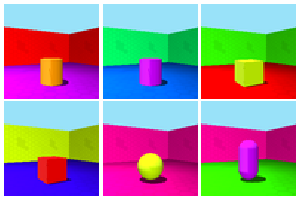}
    \end{minipage}   
    &  \begin{minipage}{.15\textwidth}
     \vspace{0.1cm} \includegraphics[width=\linewidth]{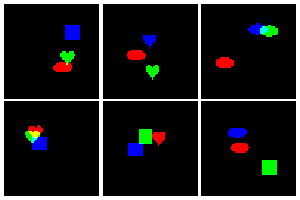}
    \end{minipage}  
    &
    \begin{minipage}{.15\textwidth}
       \vspace{0.1cm} \includegraphics[width=\linewidth]{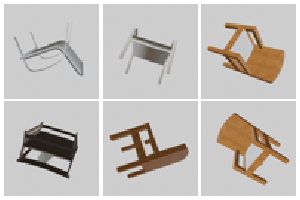}
    \end{minipage}
    & \begin{minipage}{.15\textwidth}
       \vspace{0.1cm} \includegraphics[width=\linewidth]{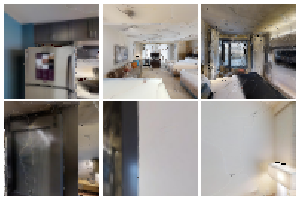}
    \end{minipage}    \\
  \midrule
  $G$ & $\mathbb{R}^3$ & $\mathbb{R}^3$ & $\mathbb{R}^6$ & $\textnormal{SO}(3)$ &   $\mathbb{R}^2 \times \textnormal{SO}(2)$   
\\
  \midrule 
$ |  \mathcal{X} / G | \  $ & $3$ & $4$ & $27$ & $3$ & $2$ \\
\bottomrule
\end{tabu}
 \end{table*}
\end{center}

\textbf{World Models}. Analogously to group actions, Markov Decision Processes (MDPs) from reinforcement learning and control theory involve a possibly stochastic interaction $\mathcal{A}\times\mathcal{X} \rightarrow \mathcal{X}$ with an environment $\mathcal{X}$ via a set $\mathcal{A}$ of moves. In general, no algrbraic structure (such as a group composition) is assumed on $\mathcal{A}$. In this context, a representation equivariant with respect to the action is referred to as \emph{World Model} (\cite{ha2018world, kipf2019contrastive, park2022learning}) or \emph{Markov Decision Process Homomorphism} (MDPH) (\cite{van2020plannable}). MDPHs are usually deployed as pre-trained representations for downstream tasks or trained jointly with the agent for exploration purposes (\cite{curiosity}). However, the latent action $\mathcal{A} \times \mathcal{Z} \rightarrow \mathcal{Z}$ of an MDPH is learned since no prior knowledge is assumed around $\mathcal{A}$ or the environment. This implies that the resulting representation is unstructured and uninterpretable. We instead assume that $G=\mathcal{A}$ is a group acting (freely) on $\mathcal{X}$, which enables us to define a geometrically-structured and disentangled latent space that guarantees isomorphic equivariant representations. We provide an empirical comparison to MDPHs in Section \ref{experiments}.

    

\section{EXPERIMENTS}\label{experiments}

\subsection{Dataset Description}
Our empirical investigation aims to assess our framework via both qualitative and quantitative analysis on datasets with a variety of symmetries. To this end we deploy five datasets summarized in Table \ref{datasettable}: three with translational symmetry extracted from dSprites and 3DShapes (\cite{sprites, shapes}), one with rotational symmetry extracted from ShapeNet (\cite{chang2015shapenet}) and one simulating a mobile agent exploring apartments and collecting first-person views. The latter is extracted from Gibson (\cite{xiazamirhe2018gibsonenv}) and generated via the Habitat simulator (\cite{habitat19iccv}). Datapoints are triples $(x,g,y)$ where $x,y$ are $64 \times 64$ images, $g\in G$ and $y = g \cdot x$. We refer to the Appendix for a more detailed description of the datasets. 

\subsection{Baselines and Implementation Details}\label{sec:baselines}
We compare our method with the following models designed for learning equivariant representations: 

\textbf{MDP Homomorphisms} (MDPH) from \cite{van2020plannable}, \cite{kipf2019contrastive}: a framework where the representation $\varphi: \ \mathcal{X} \rightarrow \mathcal{Z}$ is learnt jointly with the latent action $T: \ G \times \mathcal{Z} \rightarrow \mathcal{Z}$. The two models are trained with the equivariance loss $\mathbb{E}_{x,g,y=g \cdot x}[d(\varphi(y), \ T(g, \varphi(x)))]$ (cf. Equation \ref{loss}). In order to avoid trivial solutions, an additional `hinge' loss term $\mathbb{E}_{x,x'}[\max \{ 0, \ 1 -  d(x,x')\}]$ is optimized that encourages encodings to spread apart in the latent space. This is analogous to (the denominator of) the InfoNCE loss (Equation \ref{infonceeq}) which we rely upon to avoid orbit collapse in $\mathcal{E}$. Differently from us, an MDPH does not assume any prior knowledge on $G$ nor any algebraic structure on the latter. However, this comes at the cost of training an additional model $T$ and losing the structures and guarantees provided by our framework. 

\textbf{Linear}: a model with $\mathcal{Z} = \mathbb{R}^3$ on which $\textnormal{SO}(3)$ acts by matrix multiplication. Such a latent space has been employed in previous works (\cite{worrall2017interpretable, dynamic_enviroments}). The model is trained with the same loss as MDPH i.e., equivariance loss together with the additional hinge term avoiding collapses such as $\varphi = 0$. Note that the action on $\mathcal{Z}$ is no longer free (even away from $0$) since rotating a vector around itself has no effect. Differently from our method, the model is thus forced to lose information in order to learn an equivariant representation. 

\textbf{Equivariant Neural Renderer} (ENR) from \cite{renderer}: a model with a tensorial latent space $\mathcal{Z} = \mathbb{R}^{C \times D \times H \times W}$, thought as a copy of $\mathbb{R}^C$ for each point in a $D \times H \times W$ grid in $\mathbb{R}^3$. The group $\textnormal{SO}(3)$ \emph{approximately} acts on $\mathcal{Z}$ by rotating the grid and interpolating the obtained values in $\mathbb{R}^C$. The model is trained trained jointly with a decoder $\psi: \ \mathcal{Z} \rightarrow \mathcal{X}$ and optimizes variation of the equivariance loss incorporating reconstruction: $\mathbb{E}_{x,g, y=g\cdot x}[d_\mathcal{X}(y, \ \psi(g \cdot \varphi(x))   )]$. We set $d_\mathcal{X}$ as the standard binary cross-entropy metric for (normalized) images. Although the action on $\mathcal{Z}$ is free, the latent discretisation and consequent interpolation make the model only approximately equivariant.

\begin{figure*}[t]
\centering
\includegraphics[width=1.\linewidth]{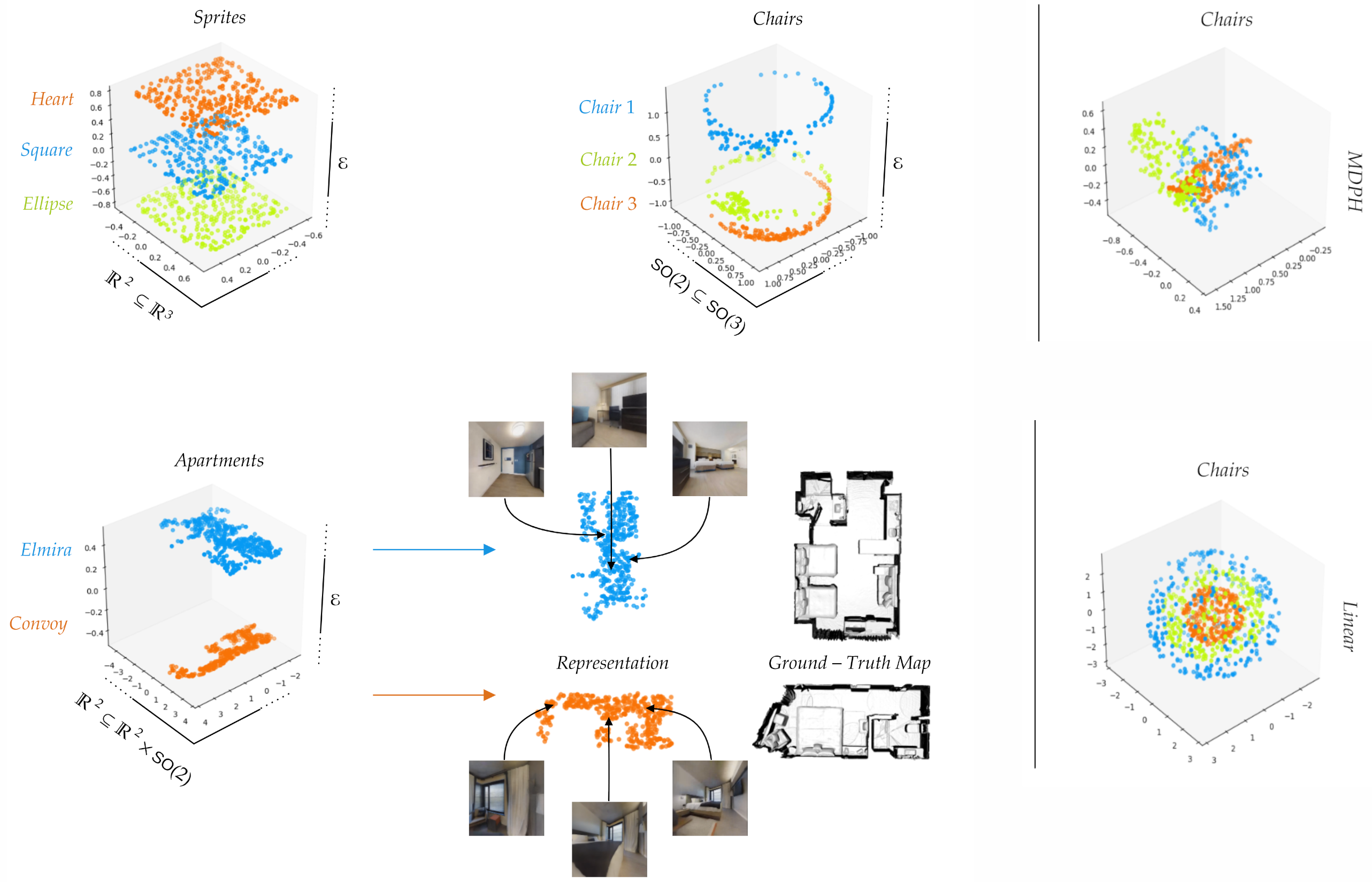}
\caption{\textbf{Left:} visualization of encodings through $\varphi$ from the Sprites, Chairs and Apartments datasets. The images display the projection to the annotated components of $\mathcal{Z}$ and data are colored by their ground-truth class. Each latent orbit from Apartments is compared to the view from the top of the corresponding scene. \textbf{Right:} same visualization for the baseline models MDPH and Linear on the Chairs dataset. }
\label{fig:visualizations} 
\end{figure*}

We implement the equivariant representation learner $\varphi$ as a ResNet-$18$ (\cite{resnet}), which is a deep convolutional neural network with residual connections. We train our models for $100$ epochs through stochastic gradient descent by means of the Adam optimizer with learning rate $10^{-3}$ and batch size $16$. The distance $d_G$ is set as the squared Euclidean one for $G=\mathbb{R}^n$ and for $G = \textnormal{SO}(2) \subseteq \mathbb{R}^2$, while it is set to the squared Frobenius one for  $G=\textnormal{SO}(3)$. The invariant component consists of a sphere $\mathcal{E} = \mathbb{S}^7 \subseteq \mathbb{R}^8$ (see Section \ref{repr}) parametrized by the normalized output of $8$ neurons in the last layer of $\varphi$. All the models  from Section \ref{sec:baselines} implement the same architecture (ResNet-$18$). ENR moreover implements $3$D convolutional layers around the latent space as suggested in the original work (\cite{renderer}). The latent action model $T$ for MDPH is implemented as a two-layer deep neural network ($128$ neurons per layer) with ReLU activation functions. For MDPH we set $\textnormal{dim}(\mathcal{Z}) = 8 + \textnormal{dim}(G)$, which coincides with the output dimensionality of our model.

\begin{table*}[h]
\centering
\setlength{\tabcolsep}{12pt}
\setlength{\extrarowheight}{4pt}
\caption{Hit-rate (mean and std over $3$ runs) on test trajectories of increasing length.}
\label{tablebold}
\begin{tabular}{ccccc}
\toprule 
    Dataset   & Model  & 1 Step  & 10 Steps   & 20 Steps
        \\
    \midrule
    \midrule 
    \multirow{2}{*}{\textsc{Sprites}} 
    & Ours     & ${\bf 1.00}_{\pm 0.00}$      & ${\bf 1.00}_{\pm 0.00}$   & ${\bf 1.00}_{\pm 0.00}$  \\
    & MDPH     & ${\bf 1.00}_{\pm 0.00}$      &   ${\bf 1.00}_{\pm 0.00}$  & $0.98_{\pm 0.02}$ \\

    \midrule
    \multirow{2}{*}{\textsc{Shapes}}
    & Ours     & ${\bf 1.00}_{\pm 0.00}$   &   ${\bf 1.00}_{\pm 0.00}$  & ${\bf 1.00}_{\pm 0.00}$  \\
    & MDPH   & ${\bf 1.00}_{\pm 0.00}$   &  $0.99_{\pm 0.01}$  & $0.96_{\pm 0.04}$ \\
    \midrule 
    \multirow{2}{*}{\textsc{Multi-Sprites}}
    & Ours   & ${\bf 1.00}_{\pm 0.00}$  &   ${\bf 0.93}_{\pm 0.03}$ & ${\bf 0.93}_{\pm 0.03}$ \\
    & MDPH     & ${\bf 1.00}_{\pm 0.00}$    &   $0.28_{\pm 0.06}$  & $0.11_{\pm 0.01}$  \\
    \midrule
    \multirow{4}{*}{\textsc{Chairs}}
    & Ours     & ${\bf 0.98}_{\pm 0.01}$     &  ${\bf 0.94}_{\pm 0.01}$  & ${\bf 0.94}_{\pm 0.01}$  \\
    & Linear     & $0.89_{\pm 0.08}$    &   $0.87_{\pm 0.10}$  & $0.87_{\pm 0.10}$\ \\
    & MDPH    & ${\bf 0.98}_{\pm 0.00}$   &   $0.88_{\pm 0.07}$ &  $0.78_{\pm 0.13}$ \\
    & ENR     & ${\bf 0.98}_{\pm 0.00}$   &   $0.91_{\pm 0.01}$  &  $0.82_{\pm 0.05}$  \\
    \midrule
    \multirow{2}{*}{\textsc{Apartments}}
    & Ours     & ${\bf 0.99}_{\pm 0.00}$       &   ${\bf 0.99}_{\pm 0.00}$   & ${\bf 0.99}_{\pm 0.00}$  \\
    & MDPH     & $0.98_{\pm 0.02}$     &  $0.94_{\pm 0.03}$   &  $0.86_{\pm 0.05}$ \\
    \bottomrule
\end{tabular}
\end{table*}

\subsection{Visualizations of the Representation}
In this section we present visualizations of the latent space of our model (Equation \ref{latentsep}), showcasing its geometric benefits. The preservation of symmetries coming from equivariance enables indeed to transfer the intrinsic geometry of data explicitly to the representation. Moreover, the invariant component $\mathcal{E}$ separates the orbits of the group action, allowing to distinguish the intrinsic classes of data in the latent space. Finally, the representation from our model automatically disentangles factors of the group as discussed in Section \ref{disent}. 

Figure \ref{fig:visualizations} (left) presents visualizations of encodings through $\varphi$ for the datasets Sprites, Chairs and Apartments. For each dataset we display the projection to one component of $\mathcal{E}$ as well as a relevant component of the group $G$. Specifically, for Sprites we display the component $\mathbb{R}^2 \subseteq G = \mathbb{R}^3$ corresponding to translations in the pixel plane, for Chairs we display a circle $\textnormal{SO}(2) \subseteq G=\textnormal{SO}(3)$ corresponding to one Euler angle while for Apartments we display the component $\mathbb{R}^2 \subseteq G = \mathbb{R}^2 \times \textnormal{SO}(2)$ corresponding to translations in the physical world. For Apartments, we additionally compare representation of    each of the two apartments with the ground-truth view from the top. 

As can be seen, in all cases the model correctly separates the orbits in $\mathcal{E}$ through self-supervision alone. Since the orbits are isomorphic to the group $G$ itself, the model moreover preserves the geometry of each orbit separately. For Sprites, this means that (the displayed component of) each orbit is an isometric copy of the pixel-plane, with disentangled horizontal and vertical translations. (Figure \ref{fig:visualizations}, top-left). For Apartments, this similarly means that each orbit exhibits an isometric copy of the real-world scene. One can recover a map of each of the explored scenes by, for example, estimating the density of data in $\mathcal{Z}$ (Figure \ref{fig:visualizations}, bottom-right) and further use the model $\varphi$ to localize the agent within such map. Our equivariant representation thus solves a \emph{localization and mapping task} in a self-supervised manner and of multiple scenes simultaneously.

As a qualitative comparison, Figure \ref{fig:visualizations} (right) includes visualizations for the models MDPH (trained with $\textnormal{dim}(\mathcal{Z})=3$) and Linear on the Chairs dataset. As can be seen, the latent space of MDPH is unstructured: the geometry of $\mathcal{X}$ is not preserved and classes are not separated. This is because the latent action of MDPH is learned end-to-end and is thus uninterpretable and unconstrained a priori. For Linear the classes are organized as spheres in $\mathcal{Z}$, which are are the orbits of the latent action by $G=\textnormal{SO}(n)$. Such orbits are not isomorphic to $G$ (one Euler angle is missing) since the action is not free. This means that $\mathcal{Z}$ loses information and does not represent the dataset faithfully.   
 


\subsection{Performance Comparison}\label{seccomparison}

In this section we numerically compare our method to the equivariant representation learning frameworks described at the beginning of Section \ref{experiments}. We evaluate the models through \emph{hit-rate}, which is a standard score that allows to compare equivariant representations with different latent space geometries (\cite{kipf2019contrastive}). Given a test triple $(x, g, y=g\cdot x)$, we say that `$x$ hits $y$' if $\varphi(y)$ is the nearest neighbour in $\mathcal{Z}$ of $g \cdot \varphi(x)$ among a random batch of encodings $\{ \varphi(x) \}_{x \in \mathcal{B}}$. For a test set, the hit-rate is then defined as the number of times $x$ hits $y$ divided by the test set size. We set the number of aforementioned random encodings to $|\mathcal{B}|=32$. For each model, the nearest neighbour is computed with respect to the same latent metric $d$ as the one used for training. In order to test the performance of the models when acted upon multiple times in a row, we generate test sets where $g$ is a trajectory i.e., it is factorized as $g = g_1 g_2 \cdots g_T$ for $T \in \{1,10,20 \}$. Hit-rate is then computed after sequentially acting by the $g_i$'s in the latent space. This captures the accumulation of errors in the equivariant representation and thus evaluates the performance for long-term predictions. All the test sets consist of $10 \%$ of the corresponding dataset.

The results are presented in Table \ref{tablebold}. As can be seen, all the models perform nearly perfectly on single-step predictions with the exception of Linear ($89 \%$ hit-rate). For the latter the latent group action is not free, which prevents learning a lossless equivariant representation and thus degrades the quality of predictions. On longer trajectories, however, our model outperforms the baselines by an increasing margin. MDPH accumulates errors due to the lack of structure in its latent space: its latent action is learned, which does not guarantee stability with respect to composition of multiple symmetries. The degradation of performance for MDPH is particularly evident in the case of Multi-Sprites ($11 \%$ hit-rate on $20$ steps), which is probably due to the large number of orbits ($27$) and the consequent complexity of the prediction task. Our model is instead robust even in presence of many orbits ($93 \%$ hit-rate on Multi-Sprites) due to the dedicated invariant component $\mathcal{E}$ in its latent space. 

When the latent space is equipped with a group action, stability on long trajectories follows from \emph{associativity} of the group composition and the action (see Definition \ref{groupdef} and \ref{actiondef}). This is evident from the results for the Chairs dataset, where our model and Linear outperform MDPH on longer trajectories ($94 \%$ and $87 \%$ against $78 \%$ hit-rate on $20$ steps) and exhibit a stable hit-rate as the number of steps increases. Even though ENR carries a latent group action, it still accumulates errors ($82 \%$ hit-rate on $20$ steps) due to the discretization of the its latent space i.e., the latent grid acted upon by $\textnormal{SO}(3)$. The consequent interpolation makes the latent action only approximately associative, causing errors to accumulate on long trajectories. 

\section{CONCLUSIONS, LIMITATIONS AND FUTURE WORK}\label{conc}
In this work we addressed the problem of learning equivariant representations by decomposing the latent space into a group component and an invariant one. We theoretically showed that our representations are lossless, disentangled and preserve the geometry of data. We empirically validated our approach on a variety of groups, compared it to other equivariant representation learning frameworks and discussed applications to the problem of scene mapping. 

Our formalism builds on the assumption that the group of symmetries is known a priori and not inferred from data. This is viable in applications to robotics, but can be problematic in other domains. If a data feature is not taken into account among symmetries, it will formally define distinct orbits. For example, the eventual change in texture for images of rigid objects has to be part of the symmetries in order to still maintain shapes as the only intrinsic classes. A framework where the group structure is learned might be a feasible, although less interpretable alternative to prior symmetry knowledge that constitutes an interesting line of future investigation.
    
\section*{Acknowledgements}
This work was supported by the Swedish Research Council, Knut and Alice
Wallenberg Foundation and the European Research Council (ERC-BIRD-884807).


\printbibliography

\clearpage
\appendix

\thispagestyle{empty}


\onecolumn 
\aistatstitle{Equivariant Representation Learning via Class-Pose Decomposition:\\ Supplementary Materials}

\section{Proofs of Theoretical Results} 

\begin{nonumbprop}
The following holds: 

\begin{itemize}
    \item There is an equivariant isomorphism
\begin{equation}
\mathcal{X} \simeq (\mathcal{X} / G ) \times G
\end{equation}
where $G$ acts trivially on the orbits  and via multiplication on itself, i.e., $g \cdot (e, h) = (e, gh)$ for $g,h \in G$, $e \in \mathcal{X} / G$. In other words, each orbit can be identified equivariantly with the group itself. 

\item Any equivariant map $\varphi :  \ (\mathcal{X} / G ) \times G \rightarrow (\mathcal{X} / G ) \times G$ is a right multiplication on each orbit i.e., for each orbit $O \in \mathcal{X}/G$ there is an $h_O \in G$ such that $\varphi(O, g) = (O', g h_O)$ for all $g \in G$. In particular, if $\varphi$ induces a bijection on orbits then it is an isomorphism. 

\end{itemize}
\end{nonumbprop}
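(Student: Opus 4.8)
The plan is to establish the two parts in sequence, both resting on Assumption~\ref{eq:free_action} (freeness of the action); the whole argument takes place in the category of sets, so ``isomorphism'' just means bijection.

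\textbf{Part 1.} First I would choose, for each orbit $O \in \mathcal{X}/G$, a base point $x_O \in O$ (this is the only appeal to the axiom of choice, and nothing deeper is needed). Since the $G$-action restricted to $O$ is transitive by the definition of an orbit and free by assumption, every $x \in O$ has a \emph{unique} $g_x \in G$ with $x = g_x \cdot x_O$: existence is transitivity, and uniqueness is freeness, because $g \cdot x_O = g' \cdot x_O$ gives $(g'^{-1}g)\cdot x_O = x_O$, hence $g'^{-1}g = 1$. I then define $\Phi \colon \mathcal{X} \to (\mathcal{X}/G)\times G$ by $\Phi(x) = (O, g_x)$, where $O$ is the orbit of $x$. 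Its two-sided inverse is $(O,g)\mapsto g\cdot x_O$, so $\Phi$ is a bijection, and equivariance is immediate: $h\cdot x = h\cdot(g_x\cdot x_O) = (hg_x)\cdot x_O$ still lies in $O$, whence $\Phi(h\cdot x) = (O, hg_x) = h\cdot \Phi(x)$, which is exactly the prescribed action on $(\mathcal{X}/G)\times G$.

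\textbf{Part 2.} For a general equivariant $\varphi$, I would write it in components as $\varphi(O,g) = (\alpha(O,g),\ \beta(O,g))$ with $\alpha(O,g)\in\mathcal{X}/G$ and $\beta(O,g)\in G$. Equivariance with respect to the given action reads $\varphi(O,hg) = (\alpha(O,g),\ h\,\beta(O,g))$ for all $h,g$. Comparing first components forces $\alpha(O,hg) = \alpha(O,g)$ for every $h$, so $\alpha$ does not depend on the group coordinate; write $O'$ for this common value. Comparing second components and specializing $g = 1$ gives $\beta(O,h) = h\,\beta(O,1)$, so with $h_O := \beta(O,1)$ we get $\varphi(O,g) = (O',\ g h_O)$, the asserted orbit-wise right translation. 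For the last claim, if $O \mapsto O'$ is a bijection of $\mathcal{X}/G$, then $\varphi$ is injective (equal images force equal orbits, hence the same $h_O$, and then $g h_O = g' h_O$ forces $g = g'$) and surjective (given $(P,k)$, pick $O$ with $O' = P$ and take $(O, k h_O^{-1})$), so $\varphi$ is a bijective equivariant map, i.e.\ an equivariant isomorphism.

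\textbf{Main obstacle.} Honestly there is no deep obstacle: once freeness is invoked the argument is bookkeeping. The two points I would be careful to state explicitly are (i) that the choice of base points $x_O$ is harmless here precisely because we work in $\mathbf{Set}$ — there is no global triviality obstruction of the kind that would arise for a principal bundle in the smooth category — and (ii) keeping the left/right distinction straight: the latent action is \emph{left} multiplication on $G$, so an equivariant self-map is forced to be a \emph{right} translation on each orbit, which is exactly what commutes with it. Everything else is routine.
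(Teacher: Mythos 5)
Your proof is correct and follows essentially the same route as the paper's: Part 1 is the same choice-of-representatives argument (you build the map $\mathcal{X}\to(\mathcal{X}/G)\times G$ where the paper builds its inverse, but the content — transitivity for existence, freeness for uniqueness — is identical), and Part 2 is the same evaluation-at-the-identity trick. You additionally spell out the final ``bijection on orbits implies isomorphism'' claim, which the paper leaves implicit; that is a harmless and welcome addition.
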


\begin{proof}
We start by proving the first statement. Choose a system of representatives $\mathcal{S} \subseteq \mathcal{X}$ for orbits, that is $\mathcal{S}$ contains exactly one element for each class. Consider the map $f \colon \ (\mathcal{X} / G) \times G \rightarrow \mathcal{X}$ given, for $s \in \mathcal{S}$ and $g\in G$, by $f([s], g) = g\cdot s, $
where $[s]$ denotes the orbit of $s$.
It is straightforward to check that $f$ is indeed equivariant. Now if $f([s],g) = f([t], h)$ for $s,t \in \mathcal{S}$ and $g,h \in G$ then $g \cdot s = h \cdot t$ and $s,t$ are thus in the same orbit, which implies $s = t$ because of uniqueness of representatives. But then $h \cdot s = g \cdot s$ and, equivalently, $g^{-1}h \cdot s = s$, from which we deduce $g = h$ since the action is free. This shows that $f$ is injective. Finally, for $x \in \mathcal{X}$, one can write $x = g \cdot s$ for the representative $s$ of the orbit of $x$, which means that $x = f([s], g)$. That is, $f$ is surjective and thus also bijective, which concludes the proof of the first statement.

We now prove the second statement. Consider an equivariant map $\varphi :  \ (\mathcal{X} / G ) \times G \rightarrow (\mathcal{X} / G ) \times G$. For each orbit $O \in (\mathcal{X} / G )$ denote by $h_O \in G$ the element such that $\varphi(O,1) = (O', h_O)$. Then by equivariance $\varphi(O, g) = \varphi(O, g1) =  (O', g h_O)$, as desired. 
\end{proof}

\section{Description of Datasets}
In our experiments we deploy the following datasets, which are also summarized in Table \ref{datasettable}: \\

 \textbf{Sprites}: extracted from \emph{dSprites} (\cite{sprites}). It consists of grayscale images depicting three sprites (heart, square, ellipse) translating and dilating in the pixel plane. The group of symmetries is $G=\mathbb{R}^3$: a factor $\mathbb{R}^2$ translates the sprites in the pixel plane while the last copy of $\mathbb{R}$, which is isomorphic via exponentiation to $\mathbb{R}_{>0}$ equipped with multiplication, acts through dilations. The dataset size is $3 \times 10^4$ and there are three orbits, each corresponding to a sprite. 
 
 \textbf{Shapes}: extracted from \emph{3DShapes} (\cite{shapes}). It consists of colored images depicting four objects (cube, cylinder, sphere, pill) on a background divided into wall, floor and sky. Again, $G=\mathbb{R}^3$ but with the action given by color shifting: each of the factors $\mathbb{R}$ acts by changing the color of the corresponding scene component among object, wall and floor. The dataset size is $4 \times 10^4$ and there are four orbits, each corresponding to a shape. 
 
\textbf{Multi-Sprites}: obtained from Sprites by overlapping images of three colored sprites (with fixed scale). The group of symmetries is $G = \mathbb{R}^6 = \mathbb{R}^2 \times \mathbb{R}^2 \times \mathbb{R}^2$, each of whose factors $\mathbb{R}^2$ translates one of the three sprites in the pixel plane. The added colors endow the sprites with an implicit ordering, which is necessary for the action to be well-defined.  The dataset size is $3 \times 10^4$. Since the scene is composed by three possibly repeating sprites, there are $3^3 = 27$ orbits corresponding to the different configurations.   

\newpage 

\textbf{Chairs}: extracted from \emph{ShapeNet} (\cite{chang2015shapenet}). It consists of colored images depicting three types of chair from different angles. The group of symmetries is $G = \textnormal{SO}(3)$, which rotates the depicted chair. The dataset size is $3 \times 10^4$ and there are three orbits, each corresponding to a type of chair. 

\textbf{Apartments}: extracted from \emph{Gibson} (\cite{xiazamirhe2018gibsonenv}) and generated via the \emph{Habitat} simulator (\cite{habitat19iccv}). It consists of colored images of first-person renderings of two apartments (`Elmira' and `Convoy'). The data simulate the visual perception of an agent such as a mobile robot exploring the two apartments and collecting images and symmetries. The latter belong to the group of two-dimensional orientation-preserving Euclidean isometries $G=\mathbb{R}^2 \times \textnormal{SO}(2)$ and coincide with the possible moves (translations and rotations) by part of the agent. One can realistically imagine the agent perceiving the symmetries through some form of odometry i.e., measurement of movement. Note that the action by $G$ on $\mathcal{X}$ is \emph{partially} defined since $g \cdot x$ is not always possible because of obstacles. As long as the agent is able to reach any part of each of the apartments, the latter still coincide with the two orbits of the group action. The dataset size $2 \times 10^4$. 

All the datasets consist of triples $(x,g,y)$ where $x,y$ are $64 \times 64$ images, $g\in G$ and $y = g \cdot x$.

\end{document}